\newtheorem{thm}{Theorem}[section]
\newtheorem{lemma}[thm]{Lemma}
\newcommand{\tabincell}[2]{\begin{tabular}{@{}#1@{}}#2\end{tabular}}  
\ificcvfinal\pagestyle{empty}\fi
\begin{document}

\title{DTNN: Energy-efficient Inference with Dendrite Tree Inspired Neural Networks for Edge Vision Applications}

\author{Tao Luo\textsuperscript{1}, Wai Teng Tang\textsuperscript{2}, Matthew Kay Fei Lee\textsuperscript{3}, Chuping Qu\textsuperscript{1}, Weng-Fai Wong\textsuperscript{4}, Rick Goh\textsuperscript{1}\\
Institute of High Performance Computing\textsuperscript{1}, Grab Holdings Inc.\textsuperscript{2}, \\ Stanford University\textsuperscript{3}, National University of Singapore\textsuperscript{4}\\
{\tt\small \{luo\_tao, qu\_chuping, gohsm\}@ihpc.a-star.edu.sg; waiteng.tang@grabtaxi.com;} \\
{\tt\small mattlkf@outlook.com; wongwf@nus.edu.sg}

}

\maketitle
\ificcvfinal\thispagestyle{empty}\fi

\begin{abstract}
Deep neural networks (DNN) have achieved remarkable success in computer vision (CV). However, training and inference of DNN models are both memory and computation intensive, incurring significant overhead in terms of energy consumption and silicon area. In particular, inference is much more cost-sensitive than training because training can be done offline with powerful platforms, while inference may have to be done on battery powered devices with constrained form factors, especially for mobile or edge vision applications.
In order to accelerate DNN inference, model quantization was proposed. However previous works only focus on the quantization rate without considering the efficiency of operations. In this paper, we propose {\bf D}endrite-{\bf T}ree based {\bf N}eural {\bf N}etwork ({\bf DTNN}) for energy-efficient inference with table lookup operations enabled by activation quantization. In DTNN both costly weight access and arithmetic computations are eliminated for inference. We conducted experiments on various kinds of DNN models such as LeNet-5, MobileNet, VGG, and ResNet with different datasets, including MNIST, Cifar10/Cifar100, SVHN, and ImageNet.
DTNN achieved significant energy saving (19.4$\times$ and 64.9$\times$ improvement on ResNet-18 and VGG-11 with ImageNet, respectively) with negligible loss of accuracy.
To further validate the effectiveness of DTNN and compare with state-of-the-art low energy implementation for edge vision, we design and implement DTNN based MLP image classifiers using off-the-shelf FPGAs.
The results show that DTNN on the FPGA, with higher accuracy, could achieve orders of magnitude better energy consumption and latency compared with the state-of-the-art low energy approaches reported that use ASIC chips.
\end{abstract}

\section{Introduction}
DNN outperforms many other methods in machine learning because of its human level performance in various kinds of application fields~\cite{ouyang2013joint,russakovsky2015imagenet,amodei2016deep}.
In particular, {\em convolutional neural networks} (CNN) have achieved state-of-the-art results on many computer version problems, including object detection and image classification~\cite{krizhevsky2012imagenet,ren2015faster}.

In general, computations on DNN model consist of two parts, comprising of training NN model and inference using the trained model.
However, both of the two parts are memory and computation intensive especially for CNN with deep depth.
In order to facilitate NN computation, powerful Graphic Processing Units (GPUs) are adopted to train the DNN model~\cite{coates2013deep}.
However, GPUs are not suitable for inference especially when deployed in embedded systems to cope with real-world applications such as mobile applications and Internet of Things (IoT), which requires both low power and real-time speed~\cite{stromatias2015scalable,lin2017towards,hubara2016binarized,Esser11441}.

In order to accelerate DNN for efficient inference, various kinds of techniques are proposed. This includes model quantization that compresses network models so as to accelerate DNN inference~\cite{hubara2016binarized,courbariaux2015binaryconnect,zhou2016doref,wang2019haq,yang2019quantization,sun2020ultra,choi2019accurate}.
Quantization of weights and activation can significantly reduce compute and storage overhead for efficient inference.
Researchers from IBM even achieve 4-bit training with low accuracy loss~\cite{sun2020ultra}.
However, radical quantization such as binarization of neural networks usually result in severe accuracy degradation~\cite{lin2017towards,hubara2016quantized,rastegari2016xnor}. 
To ensure the accuracy, researchers work towards pushing the bit width of the weight and activation to its limit without compromising the accuracy.
The state-of-the-art work found that 3-bit to 6-bit is an optimal bit width range for both weight and activation to ensure accuracy and performance~\cite{wang2019haq,yang2019quantization}.
However, even with low bit width weight and activation, weight memory access and costly arithmetic operations such as multiplication and addition are inevitable, which leads to considerable energy consumption and hardware footprint, hindering the deployment of edge AI. 
Therefore, in order to facilitate the deployment of neural network on embedded system and mobile applications, test-time inference without accuracy degradation that can totally eliminates the power hungry weight access and arithmetic operations is favored.

In this paper, we propose DTNN, which takes both model quantization and efficiency of operations into consideration.
Inspired by the tree structure of the dendrite in neural cells and the model quantization technique, DTNN achieves energy-efficient inference by replacing multiply-and-add (MAC) operations with table lookups.
Unlike traditional quantized model, in our DTNN design, weight could maintain the full bit width such as 32-bit or 64-bit, while only activation needs to be quantized, which makes DTNN not prone to accuracy drop.
With full bit width weight and low-bit activation, our DTNN achieves comparable accuracy with the full bit width baseline.
Meanwhile, by implementing nodes in tree structure with look-up tables, both costly weight access and compute-intensive arithmetic computations are eliminated in inference.

We conduct experiments from small to large DNN models ranging from LeNet-5, MobileNet to VGG, and ResNet.
In order to show the scalability of DTNN, we test it with different datasets, including MNIST, Cifar10/Cifar100, SVHN, and ImageNet.
According to the experiment results, DTNN achieves significant energy savings.
With negligible loss of accuracy, it achieves 19.4$\times$ and 64.9$\times$ energy improvement on ResNet-18 and VGG-11 with ImageNet, respectively.
To further validate the effectiveness of DTNN and compare with state-of-the-art low energy implementation for edge vision, we design and implement DTNN based MLP image classifiers using off-the-shelf FPGAs.
The results show that DTNN on the FPGA could achieve orders of magnitude better energy consumption and latency compared with the state-of-the-art low energy approaches reported that use ASIC chips.
Specifically, at the higher accuracy, our DTNN based image classifier running on FPGA consumed 1.13 nJ per image, or about 0.42\% of the 268 nJ per image reported for IBM's TrueNorth chip. In addition to such a high energy efficiency, DTNN on FPGA is five orders of magnitude faster than TrueNorth.

The contribution of this paper has four aspects:

\begin{enumerate}
  \item Proposing DTNN for efficient inference by using efficient lookup operations to replace costly multiplication and addition.
  \item Proposing activation-only quantization to support DTNN while ensuring accuracy.
  \item Leveraging tree structure to enable efficient lookup operation in DTNN.
  \item Achieving orders of magnitude improvement in energy consumption and latency compared with the state-of-the-art works.
\end{enumerate}

\section{Related Work}

In order to alleviate the computation and energy demand, model compression technique is proposed, which includes three major methods: pruning, quantization, and weight sharing~\cite{ding2019req,geng2019dataflow,8578919,hubara2016quantized,hubara2016binarized,courbariaux2015binaryconnect,zhou2016doref,wang2019haq,yang2019quantization,guo2016dynamic,han2015deep,liu2018frequency,xiao2017building}. 
In quantization, the precision of the weights and the activations are decreased.
In order to further reduce the computation cost of the models, more radical quantization methods are proposed such as BNN~\cite{hubara2016binarized}.
Binarization of both weights and activations reduces memory size and access, but results in severe accuracy degradation~\cite{lin2017towards}
Therefore, researchers are working towards pushing the bit width limits without compromising the accuracy.
In addition, many works propose model compression method specifically target to energy consumption of hardware~\cite{wang2019haq,yang2017designing}.
Wang et al.~\cite{wang2019haq} first manually set a constraint of energy consumption and then explore the quantization policy under this predefined constraint.
Yang et al.~\cite{yang2017designing} specifically prune the weights that are energy-consuming estimated by hardware.
The previous studies reduce the memory size and access and simplify the arithmetic operation by minimizing the bit width of weight and activations, but the weight access is still unavoidable and multiple bit width weight and activations incur multiplication and addition which limits the performance of the inference.

SNN approach is well known for its high energy efficiency due to its spike-based input/output, asynchronized computation/communication, and Non von Neumann architecture in hardware implementation~\cite{maass1997networks,Esser11441,pei2019towards}.
It consumes energy only when there are spike events, which results in high energy efficiency.
Dedicated chips are designed and fabricated for SNN such as TrueNorth and Tianjic~\cite{merolla2014million,pei2019towards}.
Using the TrueNorth platform, Esser et al. demonstrated the network that achieves the state-of-the-art energy efficiency for digit recognition~\cite{esser2015backpropagation}.

Unlike the methods mentioned above, our proposed DTNN is a hardware-friendly neural network that is inspired by the bio-structure of the dendrite in neural cells. By taking advantages of model quantization and using much more energy- and latency-efficient lookup operations to replace the multiplication and addition, our DTNN could achieve orders of magnitude better energy consumption and latency compared with traditional state-of-the-art efficient inference design.

\section{Method}\label{Motivation}
\subsection{Efficiency of different operations}
\label{section:LUT}

\begin{table}
\begin{center}
\begin{tabular}{|l|c|c|}
\hline
Operation & Consumed energy & Area cost\\
\hline\hline
32b FP Add & 0.9~$pJ$ & 4184~$um^2$\\
32b FP MUL & 3.7~$pJ$ & 7700~$um^2$\\
6-input LUT & 1.21$\times10^{-3}$~$pJ$ & 0.62~$um^2$\\
32b SRAM Read & 5~$pJ$ & NA\\
32b DRAM Read & 640~$pJ$ & NA\\
\hline
\end{tabular}
\end{center}
\caption{Energy of 32-bit floating point operations, 6-input, 1-output lookup operation and accesses to SRAM and DRAM for TSMC 45 nm technology node~\cite{hennessy2011computer,horowitz20141,abusultan2014look}}
\label{tab:op_efficiency}
\end{table}

\begin{figure}
  \centering
  \includegraphics[height=1.7in]{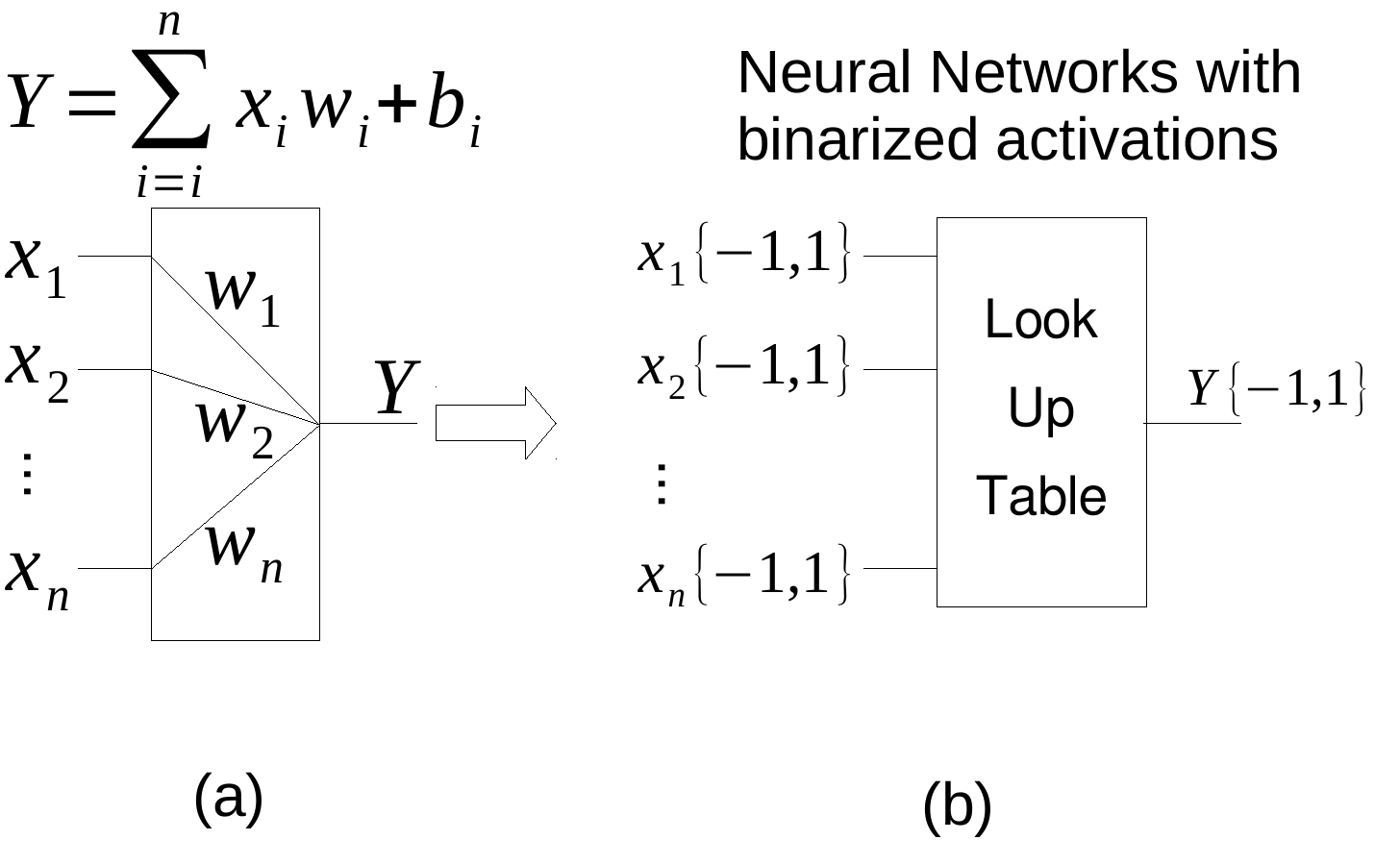}
  \caption{Realizing NN with binary activation by look-up table operation.(a) Core operation in ANN and BNN. (b) Look up table with binary input and output. }
  \label{fig:lut}
\end{figure}

As shown in Table~\ref{tab:op_efficiency}, among the different kinds of operations, memory accesses consume the most energy.
Floating point (FP) multiplication and addition operation consume nearly three orders of magnitude higher energy than a 6-input 1-output lookup operation.
Fig.~\ref{fig:lut}a shows a typical neural network layer in a neural network.
From a computation perspective, assuming that both weights and the activation are in full precision, the computation involved to compute $Y$ is the {\em multiplication-accumulation} (MAC) operation, which is the core operation in conventional artificial neural networks (ANN).
If both weight and activation are binarized, then it is the binarized neural networks~\cite{courbariaux2016binarized}, which brings benefit to hardware implementation but results in severe accuracy degradation. If we leave weight with full precision and make activations binarized, then the input and output of computation in the neural network layer are both constrained to only two possible values (e.g. -1 or 1).
With such a situation, the operations involved becomes {\em additive accumulation} (ACC). This is more efficient than MAC but is still costly because of the additions, and more importantly, of the memory accesses.

However, if we view the computation with binary digits inputs and output as a {\em lookup} operation as shown in Fig.~\ref{fig:lut}b, then the situation can be completely different.
A lookup table (LUT) of size $b 2^n$ can implement {\em any} Boolean function whose $n$ inputs and $b$ outputs are binary digits~\cite{mano}. It is the heart
of the logic in {\em field-programmable gate arrays} (FPGAs), which has high reconfigurablity and becomes a popular mean for verifying logic designs before ASIC fabrication.
{\em The key idea of DTNN is to use efficient lookup operation to replace costly floating point or fixed point multiplication and addition in inferences.}
We use a neuron unit with six binary digits inputs and a single binary digit output as an example to show how efficient it can be.
Namely, in this example, we use a lookup table of size $b 2^n$, where $b$ and $n$ equal to $1$ and $6$, respectively.

Table~\ref{cost_table} compares the cost of the 6-input neuron implementation with 32-bit FP addition versus implementation with 6-input LUT in terms of energy, delay, and area~\cite{nathan2014recycled,abusultan2014look}.
\begin{table}
  \caption{Costs of the 6-input neuron implementation with 32-bit float point addition VS 6-input LUT in terms of energy, delay, and area.}
  \label{cost_table}
  \centering
  \begin{tabular}{|l|c|c|}
  \hline
    Name     & \tabincell{c}{32-bit FP addition\\(Normalized with LUT)} & 6-input LUT \\
     \hline \hline
    Energy & 6.2 $pJ$~~~~~~~~~(5124x)      & 1.21 $fJ$    \\
    Delay  & 6.65 $ns$~~~~~~~~(168x)    & 39.5$ps$    \\
    Area   & 1201 $\mu m^2$~~~(1937x)   & 0.62 $\mu m^2$ \\
  \hline
  \end{tabular}
\end{table}
From the table, we can see that both the delay and energy consumption of the FP addition sechme are orders of magnitude higher than that of the 6-input LUT.
More importantly, weights -- the operands of the floating point additions -- have to be accessed from some form of centralized SRAM or DRAM, which would incur a significant energy overhead. This is not so with the LUT as they are already implicitly encoded in the LUT.

As stated above, we can use a lookup table of size $b 2^n$ to implement any Boolean function with $n$-bit inputs and $b$-bit outputs. 
However, to use LUT as a fundamental building block, we need to solve a major obstacle: large fan-ins are required to implement the layers of a neural networks. 
The typical number of fan-ins for each layer of neural networks varies from tens to several thousands.
An $n$-input, $b$-bits output LUT will require $b 2^n$ bits of storage. Clearly, this is infeasible if $n$ is in the thousands.

\begin{figure*}
  \centering
  \includegraphics[height=2.0in]{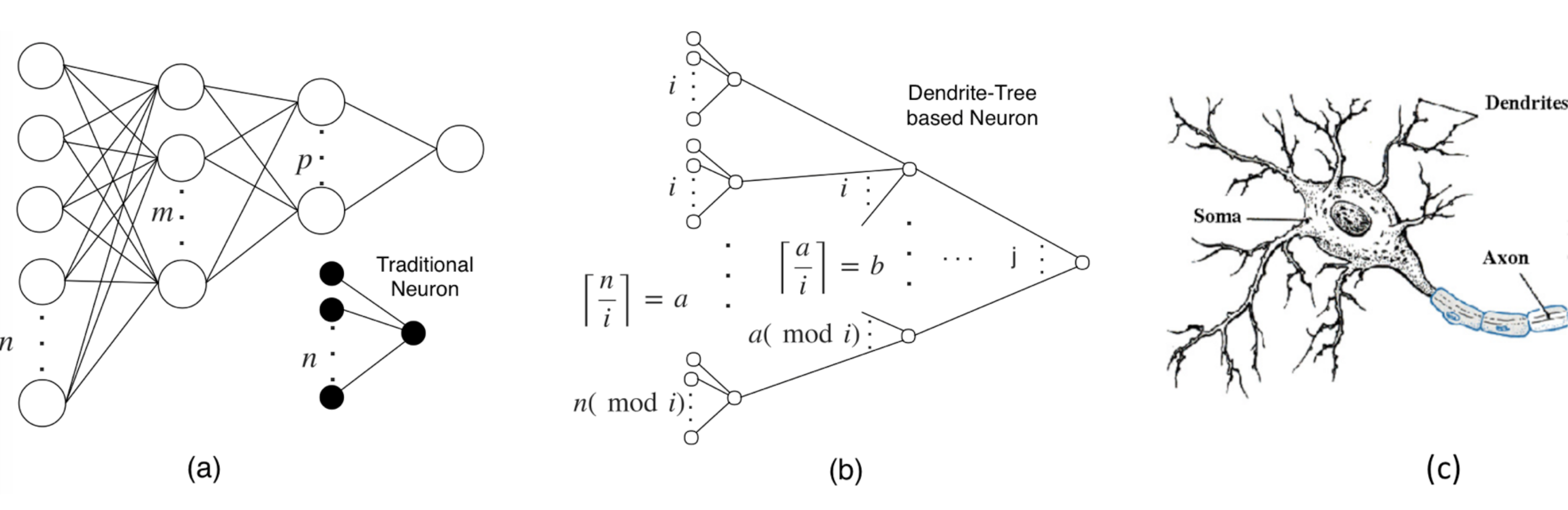}
  \caption{(a) MLP with two hidden layers built with the traditional neurons. (b) The schematic of a D-neuron. (c) The biomorph of biological neuron.}
  \label{neuron}
\end{figure*} 

To tackle the fan-in problem, we drew inspiration from the tree structure of dendrites in neural cells (Fig~\ref{neuron}c) as well as quantization technique, and propose a dendrite-tree inspired neural network~(DTNN) with activation quantization.
In neural cells, dendrites are connected with thousands of axons from previous neural cells.
Dendrites are organized in a tree structure to handle the thousands of connections.
We shall now show how this approach will solve the fan-in issue.

\subsection{Dendrite-tree inspired neural networks}
In this section, we shall describe the structure of our proposed DTNN in detail.
One of the key advantages of DTNN is that many of the state-of-the-art techniques, such as model compression with pruning, can be applied to DTNN.

\subsubsection{Dtree neuron}\label{sec:D-neuron}
In neural network models, the basic building block is a neuron.
In this section, we shall describe a general dendrite-tree inspired neuron unit we call a {\em Dtree neuron} that in theory can be used to approximate any neural network model. A quantized version of this is what is used to build DTNN.

For the ease of demonstration, we use multilayer perceptron (MLP) to demonstrate the Dtree neuron design, which could be generalized to any neuron based neural networks such as the CNN. In a multilayer perceptron (MLP), multiple neurons between layers are fully connected with one another.
Fig.~\ref{neuron}a shows a three-layer MLP with two hidden layers.
Each node in the MLP is a neuron that employs a nonlinear activation function. It takes the outputs of neurons from the previous layer as inputs, and generates an output that serves as an input to neurons from the next layer.
The function realized by the neuron with $n$ inputs is shown with Eqn.~\ref{eqn:tradition_neuron}:
\begin{equation}\label{eqn:tradition_neuron}
  y = f(\displaystyle\sum_{i=0}^{n-1} w_ix_i + b),
\end{equation}
where $x_i$ and $y$ represent the input and output of the neuron, and $w_i$ and $b$ are the weights and bias respectively.
The function $f$ is a nonlinear activation function, such as rectified linear unit (ReLU) or a hyperbolic tangent function (Tanh).
A traditional neuron in the first hidden layer with $n$ inputs is shown as dark circles in Fig.~\ref{neuron}a.


The Dtree neuron, designed to solve the large fan-in problem is shown in Fig.~\ref{neuron}b.
It is inspired by a biological neuron that utilizes a ``\textit{tree}'' structure, the ``\textit{dendrite}'', to handle the large amount of inputs. Its biomorph is shown in Fig.~\ref{neuron}c.
Instead of connecting all the inputs to the neuron directly, the Dtree neuron uses a tree with intermediate neurons (inner neurons) that are organized as {\em inner layers}. Thus, Fig.~\ref{neuron}b is the counterpart of the traditional neuron in Fig.~\ref{neuron}a, both
having $n$ inputs and one output.
The fan-in of each node in the tree structure can vary, but for simplicity, we constrain the maximum fan-in number to be $i$.
With this configuration, for a neuron with $n$ inputs, the number of inner layers is $\left \lfloor{\log_i n}\right \rfloor = L$, excluding the input layer.
The number of nodes for the first inner layer is $\left \lceil{\frac{n}{i}}\right \rceil = a$, and each of them has a fan-in of $i$, except for the last node, which has $n~(mod~i)$ inputs.
Subsequently, the outputs of first inner layer act as inputs to the next inner layer and so on.
The function realized by the dendrite-tree inspired neuron is shown in Eqn.~\ref{eqn:dt_neuron},
\begin{equation}\label{eqn:dt_neuron}
\begin{split}
  y = &f_{(L-1)}(\displaystyle\sum_{i}^{} w^{(L-1)}_i\cdot f_{(L-2)}(\displaystyle\sum_{j}^{} w^{(L-2)}_j \cdot\cdot\cdot\\ &f_0(\displaystyle\sum_{k}^{} w^{0}_kx_k + b_0) \cdot\cdot\cdot + b_{(L-2)}) + b_{(L-1)}),
\end{split}
\end{equation}
where $w^{L-1}$, $w^{L-2}$, and $w^{0}$ are the weights in the inner layers, from the last inner layer all the way to the first inner layer.
Similarly, $b_{L-1}$, $b_{L-2}$, and $b_0$ are the biases of the inner layers, and $f_{L-1}$, $f_{L-2}$, and $f_0$ are activation functions of the inner layers, from the last inner layer to the first inner layer. Note that the activation functions can be different for each inner layer.

\begin{figure*}
  \centering
  \includegraphics[width= 0.85\textwidth]{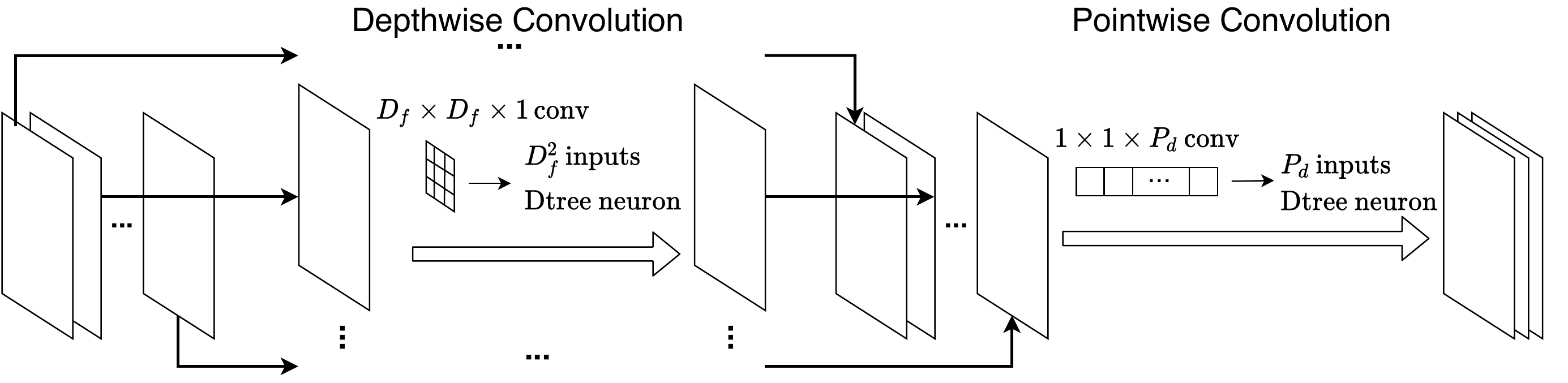}
  \caption{Depthwise and poinwise convolutional layers realized by Dtree neuron}
  \label{fig:depthwise_pointwise}
\end{figure*}

A neural network can be viewed as a function that maps input features to outputs.
The following lemma gives us the basis for approximating a traditional NN using Dtree neurons.
\begin{lemma}\label{lemma}
Given a function that represents a traditional neuron with weights and bias, $y = f(\displaystyle\sum_{i}^{} w_ix_i + b),$ where $w_i, b \in \mathbb{R}$ and $f$ is an activation function, there exists weights, biases, and activation functions for a Dtree neuron that computes the same function.
\end{lemma}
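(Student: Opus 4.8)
The plan is to exploit the one degree of freedom that the lemma explicitly grants us: we are allowed to choose the inner-layer activation functions $f_0,\dots,f_{L-1}$ freely. The idea is to make every \emph{inner} layer purely linear and to defer the single genuine nonlinearity to the root. Concretely, I would set $f_0 = f_1 = \dots = f_{L-2} = \mathrm{id}$, the identity map, and set the output activation $f_{L-1} = f$, the activation of the traditional neuron we are trying to reproduce. Under this choice the nested expression in Eqn.~\ref{eqn:dt_neuron} collapses: every inner node simply outputs an affine combination of its inputs, and the composition of affine maps is again affine, so the pre-activation arriving at the root is some affine functional of $x_0,\dots,x_{n-1}$, to which $f$ is then applied.

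It remains to choose the inner weights and biases so that this affine functional is exactly $\sum_k w_k x_k + b$. Here I would use the defining property of the tree: because the first inner layer partitions the $n$ inputs into the $a=\lceil n/i\rceil$ leaf groups and each subsequent node feeds a single parent, every input $x_k$ reaches the root along a \emph{unique} path. I would place the original weight $w_k$ on the edge joining $x_k$ to its leaf node (i.e.\ as the corresponding $w^{0}_k$) and set every weight on higher edges to $1$; then the product of weights along the unique $x_k$-to-root path equals $w_k$. For the bias I would assign the value $b$ to a single leaf node and set all remaining biases (leaf and internal) to $0$; any distribution whose inner biases propagate to a total of $b$ at the root works equally well.

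To verify the construction I would argue by induction on the height of the tree that, with identity activations on all inner layers and the weight/bias assignment above, the value produced at any node is the affine combination $\sum_{k} (\text{path-product to that node})\,x_k + (\text{sum of biases in its subtree})$ of the inputs feeding its subtree. The base case is a leaf, which outputs $\sum_{k\in G_m} w_k x_k + b_{0,m}$ by definition; the inductive step follows because an internal node with unit weights and zero bias merely sums the outputs of its children. At the root the induction yields pre-activation $\sum_{k=0}^{n-1} w_k x_k + b$, and applying $f_{L-1}=f$ gives precisely $y=f(\sum_k w_k x_k + b)$, as required.

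I do not expect a serious obstacle: the only real content is the recognition that admitting the identity as an inner activation linearizes the tree, after which the tree's unique-path property makes the target linear functional reconstructible. The sole points needing care are bookkeeping --- checking that the grouping given by $a=\lceil n/i\rceil$ and $L=\lfloor\log_i n\rfloor$ genuinely partitions the inputs so that each $x_k$ has exactly one path, and handling the ``ragged'' last node that receives $n \bmod i$ inputs --- neither of which affects the argument.
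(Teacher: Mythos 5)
Your proposal is correct and takes essentially the same route as the paper's own proof: both choose the identity for all inner activations and $f$ at the root, then use the unique leaf-to-root path of each input to make the product of edge weights equal the target weight (the paper allows arbitrary nonzero inner weights and solves $w^0_s = w_i/(w^{(L-1)}_p \cdots w^{1}_r)$, whereas you simply set the higher weights to $1$; the paper places the bias $b$ at the root via $b_{L-1}=b$ while you place it at a leaf, but with identity activations both propagate identically). Your induction on tree height is just a more explicit verification of the paper's informal per-input ``contribution'' argument, so no substantive difference remains.
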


\begin{proof}
Assume the traditional neuron has $n$ inputs, and the function it can realize is of the form in Eqn.~\ref{eqn:tradition_neuron}.
Rewriting Eqn.~\ref{eqn:tradition_neuron}, we get the equation shown below:
\begin{equation}\label{eqn:proof_tradition}
\begin{split}
  &y = f(W + b), \\ 
  &W = \displaystyle\sum_{i}^{n}w_ix_i.
\end{split}
\end{equation}
For an $n$ input Dtree neuron with $L$ inner layers, all the functions that it can represent are shown by Eqn.~\ref{eqn:dt_neuron}.
We perform the following transformations on it: 1) set $f_{L-1}$ to be the same function as $f$ in Eqn.~\ref{eqn:proof_tradition}, and set $f_{L-2}$ to $f_0$ to be the identity function, 2) set $b_{L-1}$ to $b$ in Eqn.~\ref{eqn:proof_tradition}, and $b_{L-2}$ to $b_0$ to zero.
Thus, the function in Eqn.~\ref{eqn:dt_neuron} is transformed to Eqn.~\ref{eqn:proof_dt_neuron}, where we used the indices $p$, $q$, $r$, $s$ without loss of generality.
\begin{equation}\label{eqn:proof_dt_neuron}
\begin{split}
  &y = f(W_{D} + b), \\
  &W_{D} = \displaystyle\sum_{p}^{} w^{(L-1)}_p\cdot(\displaystyle\sum_{q}^{} w^{(L-2)}_q \cdot\cdot\cdot 
  (\displaystyle\sum_{r}^{} w^{1}_r(\displaystyle\sum_{s}^{} w^{0}_sx_s)))
\end{split}
\end{equation}
Comparing Eqn.~\ref{eqn:proof_tradition} and Eqn.~\ref{eqn:proof_dt_neuron}, as long as we can show that $W_{D}$ and $W$ are identical, then the lemma holds.
From Eqn.~\ref{eqn:proof_tradition}, we can see that $W$ is the sum of each input $x_i$ multiplied by its weight $w_i$
Therefore, for any input $x_i$, its contribution to $W$ is $w_i x_i$.
Similarly, according to Eqn.~\ref{eqn:proof_dt_neuron}, for any input $x_s$, its contribution to $W_{D}$ is $w^{L-1}_pw^{L-2}_q\cdot\cdot\cdot w^{1}_rw^{0}_sx_s$
Therefore, if $x_i$ and $x_s$ are corresponding inputs in traditional neuron and Dtree neuron, then as long as $w_i$ equals to $w^{L-1}_p w^{L-2}_q\cdot\cdot\cdot w^{1}_rw^{0}_s$, the contribution to $W$ and $W_{D}$ is the same for the two inputs.
Since $w^0_s$ is only relevant to $x_s$, as long as $w^0_s$ equals  $\frac{w_i}{w^{L-1}_pw^{L-2}_q\cdot\cdot\cdot w^{1}_r}$, then for $x_i$ and $x_s$, their contribution to $W$ and $W_{D}$ is identical.
Similarly, for the remaining inputs of the neuron, the contribution of each of the inputs in Dtree neuron to $W_{D}$ can be made identical as its counterpart's contribution in the traditional neuron to $W$, and hence $W_{D}$ is made identical to $W$.
\end{proof}

Lemma~\ref{lemma} shows that theoretically, a Dtree neuron would have no lower accuracy than the traditional neuron if there is free choice in selecting the activation functions of the inner layers.
For simplicity, we consider two types of Dtree neuron in this work.
According to the lemma, as long as the activation functions of the inner layers are identity functions and the weights of the inner layers are $1$, the Dtree neuron would act like a traditional neuron. We call this type of neuron the {\em Type-I Dtree neuron}.
In the {\em Type-II Dtree neuron} the activation function of the inner layers are the same as the traditional neuron such as ReLU or Tanh, and the weights of the inner layers are trainable parameters.
The Type-II Dtree neuron has more trainable weights compared to a traditional neuron. 
For the $n$-input traditional neuron, the number of weights is $n$, whereas for the $n$-input Type-II Dtree neuron, with $m$ as the fan-in number of each node in the tree structure, the number of weights is denoted by $N_{D}$, where
\begin{equation}\label{eqn:number_additional}
N_{D} = n+n\frac{(1-(\frac{1}{m})^{\mbox{log}_m n})}{m-1} 
\end{equation}

Although the weights of DTNN are in full precision, the quantization of activation would still results in the information loss.
Fortunately, according to previous work, activations that are in 3-bit to 5-bit width range are sufficient to ensure accuracy in actual networks~\cite{wang2019haq}.

\subsubsection{DTNN based fully connected and convolutional layer}

In order to realize DTNN based neural networks, we just need to replace conventional neuron in NN by Dtree neuron.
For fully connected layer, it is quite straight forward.
For convolutional layer, we use MobileNet as an example to demonstrate how to build a DTNN based CNN.
MobileNet~\cite{howard2017mobilenets} is inspired from the depthwise separable convolutions~\cite{chollet2017xception}.
The regular convolution operations are replaced by the pointwise and depthwise convolutions, which could reduce large amount of MAC operations.
As shown in Fig.~\ref{fig:depthwise_pointwise}, MobileNets stack multiple “depthwise – pointwise” blocks repeatedly.
By replacing conventional neuron in MobileNet with Dtree neuron, we get DTNN based MobileNet, which is also shown in Fig.~\ref{fig:depthwise_pointwise}.

\subsection{Input pre-processing}\label{sec:input_prep}
As DTNN belongs to quantized neural networks, there are two methods to feed in the input data.
We use an image classification application as an example. 
In Method 1, a normal convolution layer is employed that takes as its input the high precision image.
With this method, the first layer have to be implemented using traditional floating point processing unit. which incurs extra pre-processing cost in terms of energy and hardware resources.

In Method 2, in order to achieve ultra low energy consumption, we directly binarize the input pixel to binary values, which involves negligible pre-processing effort. 
Given an input $x$, and output $y$, is simply
\begin{equation}\label{eq:activation}
    y=\left\{
        \begin{array}{rcl}
        1     &      & {x      >      \mbox{threshold}},\\
        0     &      & {\mbox{otherwise.}}
        \end{array} \right.
\end{equation}
The threshold can be learnt or be preset. Details are given in Section~\ref{sec:MNIST_network}.
Please note this radical method does not work well for large models with large datasets due to considerable information loss. 
\section{Experiment}
\textbf{Datasets and models.} In this section, we will present experimental results of DTNN on various kinds of DNN models such as LeNet-5, MobileNet,  VGG, and ResNet with different datasets, including MNIST, Cifar10, SVHN, and ImageNet.
LeNet-5 is a simple neural network with only two neural layers~\cite{lecun1998gradient}.
MobileNet is a compact model with depth-wise and point-wise convolution, which is designed specifically for mobile vision application~\cite{howard2017mobilenets}.
VGG is a standard model with conventional convolution and fully connected layers~\cite{simonyan2014very}.
ResNet is a classical neural network used as a backbone for many current computer vision tasks~\cite{he2016deep}.
It is featured with residual connections to prevent gradient from vanishing.

\textbf{Training settings.} We conduct experiments using quantization-aware training in Pytorch~\cite{paszke2019pytorch}.
For the MNIST and CIFAR10/Cifar100 dataset, the training takes a total of 220 epochs with a batch size of 512. 
The initial learning rate is set to 0.004 and then divided by 2 at every 30 epochs.
An Adam solver is adopted with betas=(0.9, 0.999), eps=$10^{-8}$, and a weight decay of $10^{-4}$.
For the SVHN dataset, the training takes a total of 70 epochs, and the rest of training settings are the same with MNIST and Cifar10/Cifar100.
For ImageNet, the training takes a total of 90 epochs with a batch size of 64.
The initial learning rate is set to 0.1 and then divided by 10 at every 30 epochs.
A SGD solver with momentum set at 0.9, and a weight decay of $10^{-4}$ was used.

\subsection{DTNN results on a small model, LeNet-5, with MNIST}
\label{sec:LeNet-5_MNIST}

\begin{table}
   \small
  \caption{Accuracy and energy (uJ) numbers of DTNN based LeNet-5 with different bit width of activation implemented by lookup operations and ANN based LeNet-5 by conventional multiplication and addition for MNIST.}

  \centering
 
  \begin{tabular}{lrrrr}

    {\bf Metrics}        & {\bf 4-bit} & {\bf 5-bit} & {\bf 6-bit} & {\bf 32FP}\\
    \midrule
    Acc.         & 99.14\%       & 99.32\% & 99.18\%  & 99.10\%  \\
    Energy            & 0.5411        & 0.5416  & 0.5441 & $40.73$ \\
    \bottomrule
  \end{tabular}
  \label{tab:lenet-5}
\end{table}
Table.~\ref{tab:lenet-5} shows Accuracy and energy numbers of DTNN based LeNet-5 with different bit width of activation implemented by lookup operations and ANN based LeNet-5 by conventional FP multiplication and addition for MNIST.
According to the table, all designs with quantized activation has no accuracy loss compared with its ANN counterpart.
Up to 75.3$\times$ energy improvement is achieved.
The reason why the consumed energy is similar for 4-bit to 6-bit activation implementation is because we implement the first layer in DTNN with traditional FP multiplication and addition, which consumes the most energy 0.54 $uJ$.

\subsection{DTNN results on large models, ResNet-18 and VGG-11, with different datasets}

\begin{table}
  \small
  
  \caption{Accuracy of DTNN based ResNet-18 with different bit width of activation implemented by lookup operation and its ANN counterpart by conventional FP multiplication and addition.}
  \label{Tab:acc-resnet}
  \centering
  \begin{tabular}{lrrrrr}

    {\bf Dataset}        & {\bf 3-bit} & {\bf 4-bit} & {\bf 5-bit} & {\bf 6-bit} & {\bf 32FP}\\
   \midrule
    Cifar10    & 88.8\%     & 92.4\%         & 92.7\%  & 93.1\% & 93.0\% \\
    Cifar100   & 65.5\%     & 71.3\%         & 72.3\%  & 72.9\% & 73.0\% \\
    SVHN   & 92.8\%    & 95.6\%   & 96.3\% & 96.3\% & 96.2\%\\
    ImageNet  & 57.9\%    & 67.1\%   & 69.6\% & 70.3\% & 69.8\%\\
 \bottomrule
  \end{tabular}
 \label{Tab:acc-resnet}
\end{table}

Table.~\ref{Tab:acc-resnet} shows accuracy of DTNN based ResNet-18 with different bit width of activation implemented by lookup operation and its ANN counterpart by conventional FP multiplication and addition.
In order to show the scalability of DTNN, we test it with both small dataset including cifar10/SVNH and large datasets, i.e., cifar100/ImageNet.
According to the table we can see that large dataset are more prone to accuracy drop due to the decrease of activation bit width, but both of them has negligible loss of accuracy with 4-bit to 5-bit activation and above.

\begin{table}
  \small
  \caption{Energy (uJ) of DTNN based ResNet-18 implemented by lookup operation and ANN based  ResNet-18 by multiplication and addition in inference for one image.}

  \centering
 
  \begin{tabular}{lrrrrr}

    {\bf Dataset}        & {\bf 4-bit} & {\bf 5-bit} & {\bf 6-bit} & {\bf 32FP}\\
    \midrule
    Cifar10/SVHN         & 31.46        & 125 & 569.2  & $1.08 \times 10^4$  \\
    Cifar100            & 31.46        & 125  & 569.3 & $1.09 \times 10^4$ \\
    ImageNet     & 613   & 900 & $2.26 \times 10^3$ & $1.75 \times 10^4$\\
    \bottomrule
  \end{tabular}
 \label{Tab:energy-resnet} 
\end{table}

Table.~\ref{Tab:energy-resnet} shows energy of DTNN based RseNet-18 implemented by lookup operation and ANN based RseNet-18 by FP multiplication and addition in inference for one image.
According to the table, DTNN achieves at least 19.4$\times$ energy saving on ImageNet with negligible loss of accuracy.
For small dataset, DTNN achieves 87.2$\times$ energy saving on cifar10.

\begin{table}

  \small
  \caption{Accuracy and energy (uJ) numbers of DTNN based VGG-11 with different bit width of activation implemented by lookup operations and ANN based VGG-11 by conventional FP multiplication and addition for ImageNet.}

  \centering
 
  \begin{tabular}{lrrrr}

    {\bf Metrics}        & {\bf 4-bit} & {\bf 5-bit} & {\bf 6-bit} & {\bf 32FP}\\
    \midrule
    Acc.         & 64.5\%        & 69.5\% & 71.0\%  & 70.4\%  \\
    Energy            & 686.9        & $1.85 \times 10^3$  & $7.38 \times 10^3$ & $1.20 \times 10^5$ \\
    \bottomrule
  \end{tabular}
  \label{Tab:vgg}
\end{table}

Table.~\ref{Tab:vgg} shows results on another standard DNN model, VGG-11, with ImageNet.
16.3$\times$ to 64.9$\times$ energy saving is achieved by DTNN with similar accuracy compared with its ANN counterpart.

\subsection{DTNN results on a compact model, MobileNet, with different datasets}

\begin{table}
  \small
  
  \caption{Accuracy of DTNN based MobileNet with different bit width of activation implemented by lookup operation and its ANN counterpart by conventional FP multiplication and addition.}

  \centering
  \begin{tabular}{lrrrrr}

    {\bf Dataset}      & {\bf 4-bit} & {\bf 5-bit} & {\bf 6-bit} & {\bf 32FP}\\
     \midrule
    Cifar10         & 89.0\%         & 90.4\%  & 90.3\% & 90.9\% \\
    Cifar100     & 61.4\%         & 64.3\%  & 64.7\% & 65.0\% \\
    SVHN      & 94.0\%   & 95.6\% & 96.1\% & 96.0\%\\
    ImageNet     & 53.3\%   & 64.15\% & 67.43\% & 68.9\%\\
    \bottomrule
  \end{tabular}
  \label{tab:acc-mobilenet}
\end{table}
Table.~\ref{tab:acc-mobilenet} shows accuracy of DTNN based MobileNet with different bit width of activation implemented by lookup operation and its ANN counterpart by conventional FP multiplication and addition.
According to the table, compared with standard DNN models, MobileNet, as a compact model for mobile application is more prone to accuracy drop due to the decrease of activation bit width.

\begin{table}
  \small
  \caption{Energy (uJ) of DTNN based MobileNet implemented by lookup operation and ANN based  MobileNet by multiplication and addition in inference for one image.}

  \centering
 
  \begin{tabular}{lrrrrr}

    {\bf Dataset}        & {\bf 4-bit} & {\bf 5-bit} & {\bf 6-bit} & {\bf 32FP}\\
    \midrule
    Cifar10/SVHN         & 5.79         & 12.8 & 46.3 & $2.27 \times 10^3$  \\
    Cifar100            & 5.80         & 12.9  & 46.4 & $2.33 \times 10^3$ \\
    ImageNet     & 71.1   & $157$ & $568$ & $5.33 \times 10^3$\\
    \bottomrule
  \end{tabular}
  \label{Tab:energy-mobilenet}
\end{table}

Table.~\ref{Tab:energy-mobilenet} shows the energy of DTNN based MobileNet implemented by lookup operation and ANN based MobileNet by multiplication and addition in inference for one image.
We can see that with negligible loss of accuracy, DTNN could achieve 9.4$\times$ and up to 181$\times$ energy saving on MobileNet with large dataset, i.e., ImageNet dataset and small dataset, respectively.
This result is inline with our expectation because MobileNet is a compact model, which is not as capable as other standard models such as ResNet and VGG. 
Therefore, it is prone to accuracy drop in model quantization for the large dataset, which limits the power saving with DTNN.

\subsection{Comparison with the state-of-the-art}

 \begin{figure}
  \centering
  \includegraphics[height=1.8in]{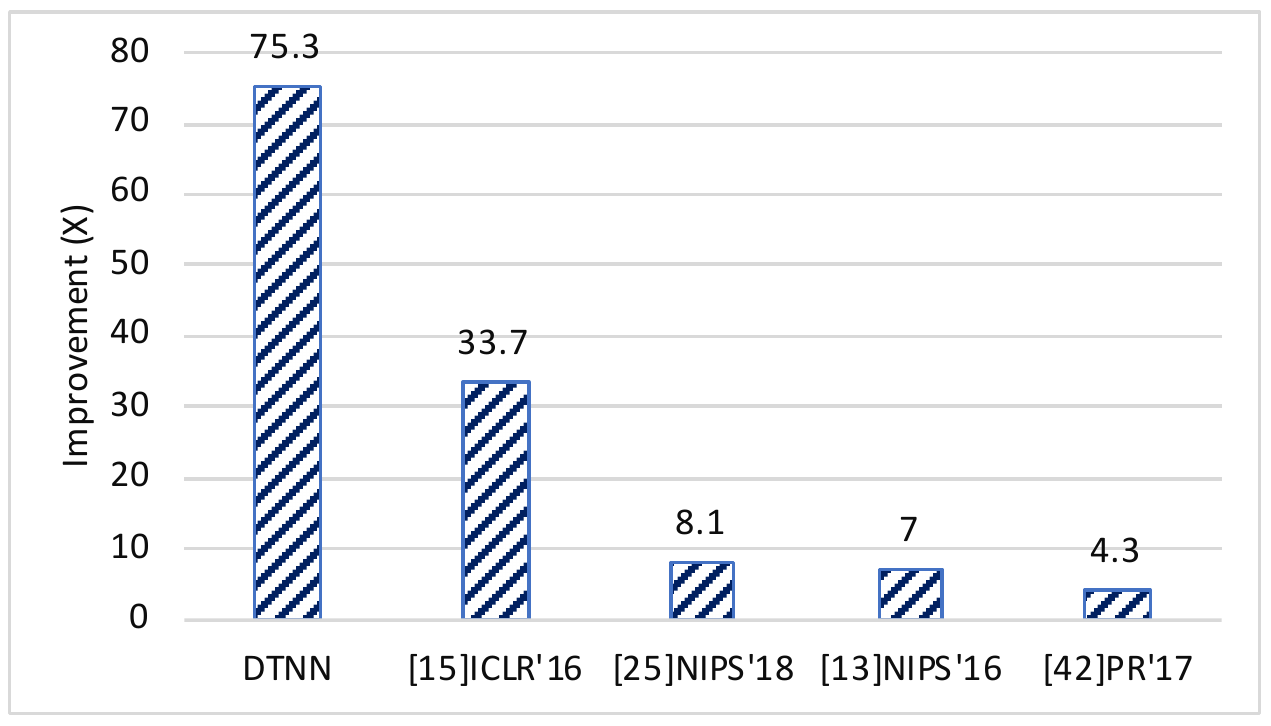}
  \caption{Energy improvement achieved by DTNN with LeNet-5 on MNIST compared with the state-of-the-art model compression methods~\cite{han2015deep,guo2016dynamic,xiao2017building,liu2018frequency}}
  \label{fig:comparison_MNIST}
\end{figure}

Fig.~\ref{fig:comparison_MNIST} shows the energy improvement achieved by DTNN with LeNet-5 on MNIST compared with the state-of-the-art model compression methods for efficient inference~\cite{han2015deep,guo2016dynamic,xiao2017building,liu2018frequency}.
From the figure, we can see that DTNN achieves 75.3$\times$ energy improvement, which is much larger than that of other SOTA results.

\begin{table}
   \small
  \caption{Comparison of accuracy and energy improvement between DTNN based MobileNet and SOTA model compression method on ImageNet.}

  \centering
 
  \begin{tabular}{lrrr}

    {\bf Metrics}        & {\bf 5-bit DTNN} & {\bf 6-bit DTNN} & {\bf \cite{wang2019haq}CVPR'19} \\
    \midrule
    Acc.         & 64.15\%       & 67.43\% & 64.8\%    \\
    \tabincell{l}{Energy \\ Improv.}            & 40$\times$        & 9.4$\times$  & 11.2$\times$  \\
    \bottomrule
  \end{tabular}
  \label{tab:comparison_MobileNet}
\end{table}

Table.~\ref{tab:comparison_MobileNet} shows the comparison of accuracy and energy improvement between DTNN based MobileNet and SOTA model compression method with MobileNet on ImageNet.
According to the table, with similar accuracy, DTNN achieved 40$\times$ energy saving, which is much larger than the SOTA 11.2$\times$. 

\subsection{Comparison with renowned low energy SNN design implemented on ASIC}
\label{sec:MNIST_network}

In order to further validate the effectiveness of DTNN, in this section, we design and implement DTNN based image classifiers using off-the-shelf FPGAs and compare it with state-of-the-art low energy ASIC implementation for edge vision. Please note that the work with SOTA energy results usually focus on the energy efficiency and don't have SOTA accuracy.
In order to conduct fair comparison, we design MLP classifiers, which as similar accuracy range as the SOTA energy-efficient work and compare with them on the energy efficiency. In this experiment we use Type-II Dtree neuron and input pre-processing Method 2 described in Section~\ref{sec:input_prep} to binarize the input image for higher energy performance.

\begin{table*}[!htb]
  \caption{Comparison between DTNN and other state-of-the-art low energy classifiers on MNIST in terms of accuracy, energy per image, and images per second.}
  \label{demo_table}
  \centering
  \begin{tabular}{llrrrr}
    {\bf Approach}     & {\bf Accuracy}    & {\bf Energy/image} & {\bf Images/second} & {\bf Tech.} & {\bf \# 6-LUT}\\
    \midrule
    SNNwt~\cite{du2015neuromorphic}    & 91.8\%    & 2.15 x $10^{5}$nJ     & NA         & \phantom{1}65nm  & -\\
    TrueNorth~\cite{esser2015backpropagation}    & 92.7\%    & 268nJ     & 1 x $10^{3}$         & \phantom{1}28nm  & -\\
    DTNN MLP-1 & 92.8\%    & 1.13nJ    & 2 x $10^{8}$   & \phantom{1}28nm & 7.90K\\
    \midrule
    TrueNorth~\cite{esser2015backpropagation}    & 95.0\%    & 4000nJ    & 1 x $10^{3}$         & \phantom{1}28nm  & -\\
    Shenjing~\cite{wang2019shenjing} & 96.11\%   & 2020nJ     & 5.95 x $10^{4}$   & \phantom{1}28nm  & -\\
    DTNN MLP-2  & 96.55\%   & 21.97 nJ     & 1.25 x $10^{8}$   & \phantom{1}28nm  & 172.79K\\
    Tianjic~\cite{pei2019towards} & 96.59\%   & 3.8 x $10^{4}$nJ     & 40   & 28nm  & -\\
    SNN(FPGA)~\cite{han2020hardware} & 97.06\%   & 2.96 x $10^{6}$nJ     & 161   & \phantom{1}28nm  & NA*\\
    \bottomrule
    \multicolumn{6}{p{1.7\columnwidth}}{\vspace{1mm} *The use of LUT is different and not comparable. As with most other FPGA implementations, they used LUTs to implement computing logic, while storing weights in block RAMs.}
  \end{tabular}
  
\end{table*}

\textbf{Classifier architectures}
We adopt MLP in this experiment, because MNIST is a small dataset, which is easy to reach above 99\% accuracy with small CNNs as shown in Section~\ref{sec:LeNet-5_MNIST}.
Using CNN cannot get similar accuracy with SOTA energy efficient SNN works with ASIC chips.
We built two classifiers based on structures of MLP (DTNN MLP-1/2) for MNIST, which are \texttt{28x28x1-10} and \texttt{28x28x1-216-10}, respectively.
We use Method 2 in Section~\ref{sec:input_prep} to generate the binary input from the dataset.
As Method 2 directly binarizes pixels in input images with a single threshold, which would invariably lead to information loss, we build the classifier by utilizing an ensemble of 5 MLPs, where each individual MLP receives as input the 784 pixels of an MNIST digit which has been binarized by a separate threshold value that is unique to that MLP, this allows the digit to be fed in at several different threshold values which mitigates the information loss that occurs due to binarization (this is analogous to HDR algorithms that capture more information about a scene by snapping multiple shots at different exposure levels). Although these threshold values could in principle be learnt, for simplicity we used five evenly spaced threshold values.
For pixel range from 0 to 1, we set five thresholds as 0.2, 0.4, 0.5, 0.6, 0.8, respectively.
An ensemble of 5 MLPs yields 5 output values per digit class, so in both our classifiers we apply a binarized linear function to each of the 5 values to reduce the output to a single binary value per digit class. The weights of the binarized linear function are learnt when training the ensemble of MLPs, and it, too, is constructed from a Dtree neuron consisting of fixed 6-input inner neurons.

\textbf{Training of DTNN based MLP-1/2}
DTNN MLP-1/2 were trained using the Adam optimizer, minimizing the cross-entropy loss. The parameters of the Adam optimizer are shown as follows (learning rate = $0.001$, beta = $0.9$, beta2 = $0.999$, epsilon = $10^{-8}$).
DTNN MLP-1 was trained with a batch size of 256 for up to $20,000$ iterations. DTNN MLP-2 was trained with a batch size of 1024 for up to $500,000$ iterations.

\textbf{Results comparisons} 
We implemented the DTNN MLP-1 design on AVNET Zedboard with Xilinx Zynq 7020 (XC7Z020-CLG484-1) FPGA, and the DTNN MLP-2 design on Alpha Data ADM-PCIE-7V3 with Xilinx Virtex-7 (XC7VX690T-2) FPGA.
In general, our DTNN architecture is not limited to implementation on commercial FPGAs only. It can also be implemented on ASICs. 
However, even with generalized commercial FPGAs, we obtained much better results than other low power approaches on dedicated ASIC chips.
Table~\ref{demo_table} shows different metrics including the accuracy, energy consumption per image, and images processed per second.

DTNN MLP-1, with no hidden neurons, achieves 92.8\% accuracy with 1.13~nJ consumption per image, which is 0.42\% of the 268~nJ per image for the TrueNorth  92.7\% accuracy NN.
The second DTNN MLP-2, which has a hidden layer containing 216 neurons. achieved a higher accuracy of 96.55\%, and an energy consumption of 21.97~nJ at a rate of 125M images per second. In comparison, a higher accuracy version on TrueNorth consumes 4000~nJ and runs at 1K images per second with a slightly lower accuracy.

\section{Conclusion}
Drawing inspiration from dendrites in neural cells, we proposed a novel Dtree neuron, and show how it can be used in deep neural networks with no loss of accuracy as compared to the traditional neuron. We then introduced a activation quantized version, and showed how a complete neural networks we called DTNN can be constructed and trained. By implementing the nodes in the DTNN model with lookup operations, both costly weight accesses and computationally expensive arithmetic computations are eliminated.
To validate its effectiveness, we conducted experiment on DTNN with various kinds of DNN models such as LeNet-5, MobileNet, VGG, and ResNet with different datasets, including MNIST, Cifar10, SVHN, and ImageNet.
DTNN achieved significant energy saving, specifically, 19.4$\times$ and 64.9$\times$ improvement on ResNet-18 and VGG-11 with ImageNet, respectively.
To further validate the effectiveness of DTNN and compare with state-of-the-art low energy SNN implementation for edge vision, we design and implement DTNN based MLP image classifiers using off-the-shelf FPGAs, and demonstrated several orders of magnitude improvement in energy consumption and speed.

{\small
\bibliographystyle{ieee_fullname}
\bibliography{egbib}
}

\end{document}